\newcommand{\w}{\mathbf{w}}
\newcommand{\s}{\mathbf{s}}
\newcommand{\x}{\mathbf{x}}
\newcommand{\f}{\mathbf{f}}
\newcommand{\g}{\mathbf{g}}
\newcommand{\R}{\mathbb{R}}
\newcommand{\T}{\mathbf{T}}
\newcommand{\A}{\mathbf{A}}
\newcommand{\J}{\mathbf{J}}
\newcommand{\W}{\mathbf{W}}
\newtheorem{Definition}{Definition}
\newtheorem{Theorem}{Theorem}
\newtheorem{Lemma}{Lemma}
\newcommand{\Lambdab}{\mathbf{L}}
\newcommand{\lambdab}{\boldsymbol{\lambda}}
\newcommand{\thetab}{\boldsymbol{\theta}}
\newcommand{\mub}{\boldsymbol{\mu}}
\newcommand{\h}{\mathbf{h}}
\newcommand{\ct}{c^{\left(t\right)}}
\newcommand{\xt}{\x^{\left(t\right)}}
\newcommand{\st}{\s^{\left(t\right)}}
\newcommand{\diag}{\mathop{\mathrm{diag}}}
\DeclareMathOperator*{\argmax}{arg\,max}
\title{Hidden Markov Nonlinear ICA: \\ Unsupervised Learning from  Nonstationary Time Series}
\author{{\bf Hermanni H\"{a}lv\"{a}} \\
University of Helsinki \\
\And
{\bf Aapo Hyv\"{a}rinen}  \\
Universit\'{e} Paris-Saclay, Inria\\
University of Helsinki}
\begin{document}

\maketitle

\begin{abstract}
Recent advances in nonlinear Independent Component Analysis (ICA) provide a principled framework for unsupervised feature learning and disentanglement. The central idea in such works is that the latent components are assumed to be independent conditional on some observed auxiliary variables, such as the time-segment index. This requires manual segmentation of data into non-stationary segments which is computationally expensive, inaccurate and often impossible. These models are thus not fully unsupervised. We remedy these limitations by combining nonlinear ICA with a Hidden Markov Model, resulting in a model where a latent state acts in place of the observed segment-index. We prove identifiability of the proposed model for a general mixing nonlinearity, such as a neural network. We also show how maximum likelihood estimation of the model can be done using the expectation-maximization algorithm. Thus, we achieve a new nonlinear ICA framework which is unsupervised, more efficient, as well as able to model underlying temporal dynamics.
\end{abstract}

\section{INTRODUCTION}
Representation learning -- the task of finding useful features from data -- is one of the main challenges in unsupervised learning. Recent theoretical and practical advances in Nonlinear ICA provide a principled approach to this problem \citep{hyvarinen_unsupervised_2016, hyvarinen_nonlinear_2017, hyvarinen_nonlinear_2019, khemakhem_variational_2020, sorrenson_disentanglement_2020}. These works frame Nonlinear ICA as deep generative models, which allows them to harness deep neural networks to recover latent independent components from observed data. Identifiability of the latent components can be guaranteed by explicitly defining probabilistic generative models with appropriate conditional independence structures. A general framework was proposed recently by \citet{hyvarinen_nonlinear_2019}, who assumed that the components are independent given some other observed auxiliary variable. For example, in time-series data this can be the time-index or segment-index if the data is non-stationary, as was earlier assumed in Time-Contrastive Learning or TCL \citep{hyvarinen_unsupervised_2016}. Non-stationarity is a fundamental property of many applications, since for example, video, audio, and most neuroscience data are non-stationary.

A crucial limitation of all of the above nonlinear ICA models is that the conditioning auxiliary variable is always assumed observable. In some sense, these models are therefore not fully unsupervised. If, for instance, we wish to exploit the nonstationary temporal structure optimally in estimating independent components, TCL would require segment indices that correspond to the different latent data generative states. In practice we don't observe such states so the default approach is to manually segment the data.

In general, it is unrealistic to assume that we can infer from observed data alone the exact time-points at which the latent data distribution changes. In fact, such change-points may not exist at all. Segmenting data manually is also infeasible for large datasets. The default approach is therefore to segment data at equal intervals, however, this is problematic for various reasons. Consider, for example, a situation where the true latent state switches between five different states. By segmenting the data at equal intervals we will end up with an unnecessarily large number of states where just a few would have done the job. This is computationally expensive, inaccurate and will completely miss out on temporal dynamics in situations where the latent states repeat over time.

In fact, often a reasonable assumption is that non-stationarity can be succinctly summarized using a limited number of segment indices or latent states, and properly modelling such state switching is likely to improve learning. Notice that even if ground-truth nonstationary information was available, the existing methods lack the machinery to perform inference on latent temporal dynamics. In many applications, for example brain imaging, describing the dynamics in terms of latent states could be very useful in its own right.

The points above highlight the need for a nonlinear ICA method that is able to cluster observations and learn latent states and their temporal dynamics in an unsupervised fashion. A well-known approach to modelling hidden latent states in time series is to use a Hidden Markov Model (HMM). HMMs can be viewed as probabilistic mixture models where the discrete latent states, which specify the data generating distribution, are time-dependent with Markov dynamics. HMMs are especially well suited for modelling non-stationary data as they automatically allow for a representation of the time series in terms of a discrete number of states.

In this work, we therefore resolve the above limitations by combining Nonlinear ICA with a HMM. This idea has been proposed earlier for linear ICA \citep{penny_hidden_2000, zhou_mixture_2008} but their identifiability and estimation results do not directly extend to the nonlinear case. In our model, we achieve this by having the latent state act in place of the conditioning auxiliary variable in the framework of \citet{hyvarinen_nonlinear_2019}. Importantly, we are able to prove that Hidden Markov Nonlinear ICAs are identifiable. Attaining identifiability has been a major research focus for both Nonlinear ICA \citep{hyvarinen_nonlinear_2017, hyvarinen_nonlinear_2019} and HMMs \citep{allman_identifiability_2009, gassiat_inference_2016}, and therefore much of our paper is devoted to combining these two research strands. To the best of our knowledge, this is the first fully unsupervised non-linear ICA, in the sense that the model's identifiability comes from an \textit{unobserved} conditioning variable which is inferred from the time series as a part of learning. We further show how the structure of the model allows us to use the Expectation-Maximization (EM) algorithm for parameter estimation. In practice the Hidden Markov Nonlinear ICA is endowed with rich representation learning capabilities that allow it to simultaneously extract independent components and to learn the dynamics of the latent state that drives non-stationarity data, as illustrated by our simulations.

\section{BACKGROUND}
We start by giving an overview of the problem of unidentifiability in both nonlinear ICA and HMMs, and recently proposed solutions. For both types of models, identifiability arises as a consequence of appropriate temporal structures which suggests a natural synthesis between the two.

\subsection{NONLINEAR ICA AND IDENTIFIABILITY}

Consider a parametric model of observed data $\x$ with marginal likelihood $p_{\thetab}(\x)$. This model is \textit{identifiable} if it fulfils below:
\begin{align}
	p_{\thetab}(\x) = p_{\thetab'}(\x) \Rightarrow \thetab = \thetab' : \, \, \forall (\thetab, \thetab')
\end{align}
In the context of a latent variable model, this is closely connected to the idea of being able to recover the original latent variables, as discussed by \citet{khemakhem_variational_2020}.

Assume we observe $N$-dimensional data at discrete time-steps, $\xt = (x_1^{(t)}, \dots,  x_N^{(t)})$. Simple nonlinear ICA can be defined as the task of estimating an unobserved $N$-dimensional independent component vector $\st = (s_1^{(t)}, \dots,  s_N^{(t)})$ such that $p(\st)=\prod_{i=1}^N p(s_i)$, as well as the inverse of a mixing function $\f$, that has generated the observed data:
\begin{align}
	\xt = \f(\st)
	\label{eq:mix}
\end{align}
Unfortunately, without a temporal structure, that is if $\xt$ are i.i.d over the time-index, and if there are no constraints on $\f$, then this model is unidentifiable \citep{hyvarinen_nonlinear_1999}. In fact, the authors show that there are infinite potential nonlinear transformations and independent components that would satisfy the model, with no criterion for choosing one of them over the others. 

In order to make the model identifiable, constraints are thus needed. For time-series data, this comes naturally by placing restrictions on the temporal structure of the model. For linear ICA this approach has been shown to yield identifiable models \citep{belouchrani_blind_1997, tong_indeterminacy_1991}, and extensions to the nonlinear case have been also been proposed in earlier work \citep{harmeling_kernel-based_2003, sprekeler_extension_2014}. The first fully rigorous proof of an identifiable nonlinear ICA model, along with an estimation algorithm (Time-Contrastive Learning or TCL), was given by \cite{hyvarinen_unsupervised_2016}. The constraint imposed in that work is that of a non-stationary data generative process such that independent component vectors within different time-segments have different distributional parameters. Specifically, the model assumes that each independent component has an exponential family distribution, where the time segment index $\tau$ modulates the natural parameters (denoted as $\lambdab$):
\begin{align}
	p_{\tau}(s_i) = \frac{q_i(s_i)}{Z(\lambdab_i)}\exp\{\langle \lambdab_i(\tau), \T(s_i) \rangle\}
\end{align}
where $q_i$ is the base measure and $\T$ are the sufficient statistics. TCL then assumes that the independent components in all the segments are transformed into observed variables by some mixing function \eqref{eq:mix}. The authors prove identifiability up to a linear transformation of pointwise functions of the components:
\begin{align}
	\T(\st) = \A \h(\xt; \thetab) +\mathbf{b}
\end{align}
By learning to contrast between the different segments, the TCL algorithm learns the inverse of the mixing function and the independent components.

This seminal work has inspired other frameworks for identifiable nonlinear ICA estimation. Permutation Contrastive Learning \citep{hyvarinen_nonlinear_2017}, for instance, exploits temporal dependencies, rather than non-stationarity, to identify independent components. The unifying tenet of these identifiable nonlinear ICA algorithms is that independent components are \textit{conditionally independent} given some observed auxiliary variable. This general idea was formalized in \cite{hyvarinen_nonlinear_2019}, of which both the TCL (segment index as auxiliary variable) and the PCL (past data as auxiliary variable) are special cases.  

These identifiable nonlinear ICA models provide a principled approach to finding meaningful data representations. This is in contrast to the majority of recent deep generative models used for representation learning, such as VAEs \citep{kingma_auto-encoding_2014} and GANs \citep{goodfellow_generative_2014}, which are all malaised by unidentifiability. In fact, any generative latent variable model with an unconditional prior is unidentifiable. This issue is portrayed in depth by \cite{khemakhem_variational_2020} who resolve it by introducing identifiable VAE (iVAE). Like regular VAE, this model estimates a full generative model $p_{\thetab}(\x, \textbf{s})$, but with a factorial conditional prior $p_{\thetab}(\textbf{s} | \textbf{u})$. As in \cite{hyvarinen_nonlinear_2019}, $\textbf{u}$ is some auxiliary variable, and iVAE provides a novel algorithm to estimate nonlinear independent components in the same identifiable framework. iVAE however suffers from the same problems as TCL, as its auxiliary variable $\textbf{u}$ has to be observed. 

\subsection{HIDDEN MARKOV MODELS AND IDENTIFIABILITY}
In order to define HMMs, let $\xt \in \R^n$ be an observed random variable from a time series with a discrete time index $t \in \{1, \dots, T\}$. In a standard hidden Markov model, distribution of the observations depends conditionally on a discrete latent state random variable $\ct$ as per $p(\xt|\ct)$; we refer to this as the emission distribution. The latent state $\ct$ undergoes first-order Markov process governed by a $C \times C$ transition-probability matrix $\A$. $A_{i, j}$ is used to denote the probability of transitioning from state $c^{(t)}=i$ to $c^{(t+1)}=j$, and $\pi(c^{(1)})$ the starting-state probabilities.  The likelihood of a typical HMM is hence given by:
\begin{align}
	&p(\x^{(1)},\dots, \x^{(T)}; \A) \nonumber =\\ &\sum_{c^{(1)}, \dots,c^{(T)}} \pi(c^{(1)})p(\x^{(1)}|c^{(1)}) \prod_{t=2}^T A_{c^{(t-1)},c^{(t)}}p(\x^{(t)}|c^{(t)})
	\label{eq:hmm_likeli}
\end{align}
HMMs can be viewed as mixture models where the latent state is coupled across time by a Markov process. This observation raises the question of identifiability since mixture models with non-parametric emission distributions are generally unidentifiable, though many commonly used parametric forms are identifiable \citep{allman_identifiability_2009}. Recently, however, \cite{gassiat_inference_2016} have proven a major result that nonparametric HMMs are in general identifiable under some mild assumptions. We will use this result later and thus reproduce it here (notice that their nonparametric result subsumes parametric HMMs): 

\begin{Theorem} \citep{gassiat_inference_2016} \label{T:gassiat}
Assume the number of latent states, $C$ , is known. Use $\mu_1, \dots, \mu_C \in \R^N$ to denote nonparametric probability distributions of the $C$ emission distributions. Also assume that the transition-matrix $\A$ is full rank. Then the parameters $\A$ and $M = (\mu_1, \dots, \mu_C)$ are identifiable given the distribution, $\mathbb{P}_{\A, M}^{(3)}$, of at least 3 consecutive observations $\x^{(t-1)}, \x^{(t)}, \x^{(t+1)}$, up to label swapping of the hidden states, that is: if $\widehat{\A}$ is a $C \times  C$ transition matrix, if $\widehat{\pi}(c)$  is a stationary distribution of $\widehat{\A}$ with $\widehat{\pi}(c) > 0$ $\forall c \in \{1, \dots, C \}$, and if  $\hat{M} = (\hat{\mu}_1, \dots, \hat{\mu}_C)$ are $C$ probability distributions on $\R^N$ that verify the equality of the HMM distribution functions $\mathbb{P}_{\widehat{\A}, \hat{M}}^{(3)} = \mathbb{P}_{\A, M}^{(3)}$, then there exists a permutation $\sigma$ of the set $\{1,\dots, C\}$ such that for all $k, l = 1, \dots, C$ we have $\hat{A}_{k, l} = A_{\sigma (k), \sigma (l)}$ and $\hat{\mu}_k = \mu_{\sigma (k)}$.
\end{Theorem}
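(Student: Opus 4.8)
The plan is to reduce the identifiability claim to the uniqueness of a three-way tensor (canonical polyadic) decomposition, exploiting the conditional independence that the hidden state induces among three consecutive observations. Conditioning on the middle state $c^{(t)}=k$ renders $\x^{(t-1)}$, $\x^{(t)}$ and $\x^{(t+1)}$ mutually independent, so, writing $\pi$ for the stationary law of $\A$, the joint density factors as
\begin{align}
\mathbb{P}^{(3)}_{\A,M} = \sum_{k=1}^{C} \pi_k \, b_k \otimes \mu_k \otimes f_k ,
\end{align}
where $f_k = \sum_{l} A_{k,l}\,\mu_l$ is the law of the next observation given the middle state and $b_k = \sum_{j} (\pi_j A_{j,k}/\pi_k)\,\mu_j$ is the law of the previous one. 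This exhibits $\mathbb{P}^{(3)}$ as a sum of $C$ rank-one terms, exactly the structure to which Kruskal-type uniqueness theorems apply.

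First I would verify the rank conditions needed for uniqueness. Assuming the emission laws $\mu_1,\dots,\mu_C$ are linearly independent (the natural nonparametric analogue of distinctness), the middle factor has full rank $C$. Because $\A$ is full rank, the coefficient matrices relating $\{f_k\}$ and $\{b_k\}$ to the basis $\{\mu_l\}$ are invertible, so the forward and backward factors are linearly independent as well. Thus all three factor families have Kruskal rank $C$, and since $3C \ge 2C+2$ for $C\ge 2$ the Kruskal condition is met.

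Next I would pass from the functional setting to a finite-dimensional one. Choosing an orthonormal basis of the relevant $L^2$ space and projecting each factor onto its first $J$ coordinates yields three factor matrices; for $J$ large enough, linear independence of the underlying functions is preserved, so the projected matrices retain rank $C$. Applying the Allman--Matias--Rhodes/Kruskal theorem to the projected tensor gives uniqueness of its decomposition up to a common permutation and per-factor rescaling. I would then remove the scaling ambiguity using the fact that $\mu_k$, $f_k$ and $b_k$ all integrate to one, which simultaneously pins the weights to $\pi_k$; this recovers each $\mu_k$ and the stationary law up to a single permutation $\sigma$. Finally, solving the linear system $f_k = \sum_l A_{k,l}\mu_l$ --- uniquely solvable by linear independence of the $\mu_l$ --- recovers $\A$, with the same $\sigma$ acting on both indices, giving $\hat A_{k,l}=A_{\sigma(k),\sigma(l)}$ and $\hat\mu_k=\mu_{\sigma(k)}$.

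The main obstacle I expect is the nonparametric step: transferring algebraic uniqueness, which is naturally stated for finite arrays, to genuinely infinite-dimensional emission distributions. The delicate points are guaranteeing that the finite projection preserves the Kruskal rank conditions (hence the need for true $L^2$ linear independence of the emissions and a sufficiently rich basis) and checking that the decomposition of the projected tensor corresponds faithfully to the original functional factors rather than to a spurious low-dimensional artefact. By contrast, once the finite-dimensional uniqueness and the normalization-based removal of scaling are in place, recovering $\A$ and matching the permutation across factors is routine linear algebra.
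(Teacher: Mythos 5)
Your proposal cannot be checked against a proof in the paper, because the paper does not prove Theorem~\ref{T:gassiat}: it is quoted as an external result from \citet{gassiat_inference_2016} and used as a black box inside the proof of Theorem~\ref{T1}. Measured against the actual published proof of that result, your route is essentially the standard one: the argument of \citet{gassiat_inference_2016}, building on \citet{allman_identifiability_2009}, likewise writes the law of three consecutive observations as a $C$-term sum of product measures indexed by the middle hidden state, reduces to finite dimensions, and invokes Kruskal-type uniqueness of the three-way decomposition before recovering $\A$ by linear algebra. Your decomposition $\sum_k \pi_k\, b_k \otimes \mu_k \otimes f_k$, the verification that all three factor families inherit rank $C$ from the full rank of $\A$ (together with $\pi_k>0$) and from linear independence of the $\mu_k$, the check $3C \ge 2C+2$, and the recovery of $\hat{A}_{k,l}=A_{\sigma(k),\sigma(l)}$ with a single permutation are all correct. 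One point you got right that deserves emphasis: the linear-independence hypothesis on $\mu_1,\dots,\mu_C$ that you introduce is not a convenience but is indispensable (the claim is false without it); it is an explicit assumption of the original theorem which the paper's restatement silently drops, and in the paper's own use of the result it is supplied by assumption~\ref{t1:as3} of Theorem~\ref{T1}.

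The remaining work is exactly where you locate it, and two specific pieces of your sketch need to be made precise. First, Kruskal's theorem must be applied in its strong form---uniqueness against \emph{any} alternative decomposition into at most $C$ rank-one terms---because $\hat{\A}$ and $\hat{M}$ are not assumed full rank or linearly independent; only the true parameters carry rank hypotheses. Second, the permutation delivered by Kruskal at projection level $J$ could a priori vary with $J$; the standard repair is to use nested projections, fix a level $J_0$ at which the projected true emissions are already linearly independent (hence distinct), and argue by consistency of projections that $\sigma_J=\sigma_{J_0}$ for all $J\ge J_0$ before letting $J\to\infty$. Relatedly, the unit-mass normalization you invoke to kill the scaling indeterminacy is not visible in orthonormal $L^2$ coordinates; either project onto finite partitions of $\R^N$ (where cell probabilities sum to one automatically) or note that the scalings are consistent across nested levels, so in the limit the factors are proportional probability densities and hence equal. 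With these points filled in, your argument is a complete proof along the same lines as the cited one.
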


Much like for TCL, identifiability in nonparametric HMMs is a result of temporal structure, namely observations across time are independent conditionally on the latent state---this is in contrast to simple (i.i.d.) mixture models for which such general identifiability results are not available. We show below that this temporal structure of the HMM's, together with nonstationarity similar to TCL, combine to identify the resulting Hidden Markov Nonlinear ICA model.

\section{IDENTIFIABLE NONLINEAR ICA FOR NONSTATIONARY DATA}

In this section, we propose a combination of a hidden Markov model and nonlinear ICA. Specifically, we propose an HMM which has nonlinear ICA as its emission model, and show how to estimate it by Expectation-Maximization.

\subsection{MODEL DEFINITION} \label{sec:model_def}
To incorporate nonlinear ICA into the standard HMM of \eqref{eq:hmm_likeli} we define the emission distribution $p(\xt|\ct)$ as a deep latent variable model.  
First, the latent independent component variables $\st \in \R^N$ are generated from a factorial exponential family prior, given the hidden state $\ct$, as
\begin{align}
	&p(\st|\ct; \lambdab_{\ct})= \prod_{i=1}^N p(s_i^{(t)}|\ct; \lambdab_{i, \ct}) \nonumber \\ &=\prod_{i=1}^N \frac{h(s_i^{(t)})}{Z(\lambdab_{i,\ct})}\exp\{\langle \lambdab_{i,\ct},  \T_i(s_i) \rangle\}
	\label{eq:sourc_dist}
\end{align}
where $h(.)$ are the base measures, $Z(\lambdab_{i,\ct})$ the normalizing constants, and $\T_i:\R \rightarrow \R^V$ the sufficient statistics. Second, the observed data is generated by a nonlinear mixing function as in Eq.~(\ref{eq:mix}).

For remainder of the paper we assume that the exponential family is in minimal representation form so that the sufficient statistics are linearly independent. The corresponding $V$-dimensional parameter vectors are denoted by $\lambdab_{i,\ct}$. The subscripts indicate that the parameters of the $N$ different components are modulated directly, and independently, by the HMM latent state. Indeed, it is precisely this conditional dependence of the parameters on the discrete latent state that seeps through our model and generates non-stationary observed data. Note that the parameters themselves are time-homogeneous, that is they are constant over time; instead, the latent state evolves over time and determines which set of parameters is active a point in time. In other words, non-stationary arises purely from the dynamics of the latent state $\ct$. The full set of parameters for the independent components can hence be captured by a $C\times NV$ matrix $\Lambdab$ (plus the transition probabilities of the hidden states).

The nonlinear mixing function $\f$ in Eq.~(\ref{eq:mix}) is assumed to be bijective with inverse given by $\st = \g(\xt)$. It follows that in our model the conditional emission distribution for observed data is:
\begin{align}
	&p(\xt|\ct; \f, \lambdab_{\ct})= \nonumber \\ &|\J\g(\xt)|\frac{H(\g(\xt))}{Z(\lambdab_{\ct})}\exp\{\langle\lambdab_{\ct}, \T(\g(\xt))\rangle\}
	\label{eq:emission}
\end{align}
where $|\J\g(\xt)|$ is short-hand notation for the absolute value of the determinant of the Jacobian of the inverse (demixing) function, and $H(\g(\xt))=\prod_{i=1}^N h(g_i(\xt))$. We have also simplified notation by stacking the vectors for different components $\T = (\T_1, \dots, \T_N)^T$ and $\lambdab_{\ct} = (\lambdab_{1, \ct}, \dots, \lambdab_{N, \ct})^T$.

We allow $\f$ to be any arbitrary but bijective function. In practice, it can be learned as a neural network. The model can therefore be viewed as a deep generative model for non-stationary data. Finally, using $\thetab = \{\f, \Lambdab \}$ and $\Theta = \{\thetab, \A\}$ our hidden Markov nonlinear ICA model's data-likelihood is given as:
\begin{align}
	&p(\x^{(1)},\dots, \x^{(T)}; \Theta) \nonumber =\\ &\sum_{c^{(1)}, \dots,c^{(T)}} \Big( \pi(c^{(1)})p(\x^{(1)}|c^{(1)}, \thetab_{c^{(1)}}) \times \nonumber \\ 
	&\prod_{t=2}^T A_{c^{(t-1)},c^{(t)}}p(\x^{(t)}|c^{(t)}; \thetab_{\ct})\Big)
	\label{eq:model_likeli}
\end{align}
where the emission distributions in Eq.~(\ref{eq:emission}) should be plugged in.

\subsection{ESTIMATION} \label{sec:estimation}
Assume we have a sequence of observed data $\mathcal{D} = \{\x^{(1)}, \x^{(2)}, \dots, \x^{(T)}\}$ generated by \eqref{eq:model_likeli}. In order to estimate the model parameters in practice we will choose the factorial prior in \eqref{eq:sourc_dist} from a well-known family such that the normalizing constant is tractable, such as a Gaussian location-scale family. Intractable normalizing constant would make estimation very difficult, even by approximate inference methods such as Variational Bayes or VAEs. However, notice that the choice of distribution for the latent prior does not severely limit the type of data that can be modelled since the non-linear mixing function can be any arbitrary function.

Tractable exponential families also make it easy to estimate the model parameters by maximizing the likelihood in \eqref{eq:model_likeli} by the EM algorithm. The "free-energy" EM lower bound for our model is given by:
\begin{align}
	&\mathcal{L}(q(\mathbf{c}), \Theta) := \nonumber \\ &\mathbb{E}_{q(\mathbf{c})}\left[ \log p(\mathbf{c}, \x^{(1)},\dots, \x^{(T)}; \Theta) \right] - \mathbb{E}_{q(\mathbf{c})}\left[\log q(\mathbf{c}) \right]
	\label{eq:lower_bound}
\end{align}
where $\mathbf{c} = (c^{(1)},\dots, c^{(T)})$, such that the first RHS terms is the complete-data likelihood under some distribution $q(\mathbf{c})$. In the E-step one finds $q(\mathbf{c}_{\star}) := \argmax_{q(\mathbf{c})} \mathcal{L}(q(\mathbf{c}), \Theta) = p(\mathbf{c}| \x^{(1)},\dots, \x^{(T)}; \Theta)$ which is the standard result for HMMs and can be easily computed using the forward-backward (Baum-Welch) algorithm. In the M-step we aim to find $\Theta_{\star} = \argmax_{\Theta} \mathcal{L}(q(\mathbf{c_{\star}}), \Theta)$, which reduces to maximizing:
\begin{align}
	&\mathcal{\tilde{L}} (q(\mathbf{c}), \Theta):=  \mathbb{E}_{q(\mathbf{c_{\star}})}\left[ \log p(\mathbf{c}, \x^{(1)},\dots, \x^{(T)}; \Theta) \right] \nonumber =\\
	& \sum_{t=1}^T \mathbb{E}_{q(c_{\star}^{(t)})}\left[\log p(\x^{(t)}|c^{(t)};\thetab_{c^{(t)}}) \right] \nonumber \\
	&+ \sum_{t=2}^T \mathbb{E}_{q(c_{\star}^{(t-1)}, c_{\star}^{(t)})}\left[\log A_{c^{(t-1)},c^{(t)}}\right]
\end{align}
where we have left out the initial-state probability term as we can assume a stationary process and infer them directly from $\A$ as its left eigenvector. The M-step updates for $\A$ are standard:
\begin{align}
	A_{i,j}^{\star} \leftarrow \frac{\sum_{t=2}^T q(c_{\star}^{(t-1)}=i, c_{\star}^{(t)}=j)}{\sum_{t=1}^T q(c_{\star}^{(t)})}
\end{align}
M-step updates for the parameters $\Lambdab$ also follow from standard EM results for exponential families:
\begin{align}
	&\nabla_{\lambdab_k}\sum_{t=1}^T \mathbb{E}_{q(c_{\star}^{(t)})}\left[\log p(\x^{(t)}|c^{(t)};\thetab_{c^{(t)}}) \right] \nonumber \\
	&=\nabla_{\lambdab_k}\sum_{t=1}^T \mathbb{E}_{q(c_{\star}^{(t)})}\left[\langle \lambdab_k, \T(\g(\xt)) \rangle - \log Z(\lambdab_k) \right] \nonumber \\
	&=\sum_{t=1}^T q(c_{\star}^{(t)}=k)\left[\T(\g(\xt)) - \frac{\nabla_{\lambdab_k}Z(\lambdab_k)}{Z(\lambdab_k)} \right] =0 \nonumber\\
	&\Rightarrow \frac{\nabla_{\lambdab_k}Z(\lambdab_k)}{Z(\lambdab_k)} = \frac{\sum_{t=1}^T q(c_{\star}^{(t)}=k)\T(\g(\xt))}{\sum_{t=1}^T q(c_{\star}^{(t)}=k)}
	\label{eq:m_step}
\end{align}
where LHS can be rewritten as:
\begin{align}
	&\frac{1}{Z(\lambdab_k)}\nabla_{\lambdab_k} \int \left( |\J\g(\xt)|H(\g(\xt)) \right. \nonumber \\ 
	&\left. \times \exp\{\langle \lambdab_{\ct}, \T(\g(\xt)) \rangle\} \right) \nonumber\\
	&= \mathbb{E}_{p(\x^{(t)}|c^{(t)};\thetab_{c^{(t)}})}\left[\T(\g(\xt))\right]
\end{align}
Thus the M-step updates for $\lambdab_k^{\star}$ are the ones that solve:
\begin{align}
	&\mathbb{E}_{p(\x^{(t)}|k;\lambdab_k^{\star}, \f)}\left[\T(\g(\xt))\right] \nonumber \\
	&= \frac{\sum_{t=1}^T q(c_{\star}^{(t)}=k)\T(\g(\xt))}{\sum_{t=1}^T q(c_{\star}^{(t)}=k)}
	\label{eq:m_step_updates}
\end{align}
In practice, \eqref{eq:m_step_updates} has closed-form updates for many usual exponential family members. As an example, if we were to use a Gaussian distribution, the updates for mean and variance vectors would be:
\begin{align}
	&\mub_k^{\star} \leftarrow \frac{\sum_{t=1}^T q(c_{\star}^{(t)}=k) \g(\xt)}{\sum_{t=1}^T q(c_{\star}^{(t)}=k)} \nonumber \\
	&\boldsymbol{\sigma}_k^{2 \star} \leftarrow \diag \left(\frac{\sum_{t=1}^T q(c_{\star}^{(t)}=k)\textbf{y}_k^{\star} \textbf{y}_k^{\star,T}}{\sum_{t=1}^T q(c_{\star}^{(t)}=k)}\right)
\end{align}
where $\textbf{y}_k^{\star} = \g(\xt)-\mub_k^{\star}$

Next, the demixing function is estimated by parameterizing it as a deep neural network but for notational simplicity we will not write these parameters explicitly and instead subsume them in $\g$. Since an exact M-step is not possible, a gradient ascent step on the lower bound is taken instead, where the gradient is given by:
\begin{align}
	& \nabla_{\g} \mathcal{\tilde{L}} (q(\mathbf{c}), \Theta) = \nabla_{\g}\sum_{t=1}^T \mathbb{E}_{q(c_{\star}^{(t)})}\left[\log p(\xt|\ct;\thetab_{c^{(t)}}) \right] \nonumber \\
	&=\nabla_{\g}\sum_{t=1}^T \log|\J\g|\nonumber \\
	& +\nabla_{\g}\sum_{t=1}^T \mathbb{E}_{q(c_{\star}^{(t)})} \left[\log H(\g)+ \langle \lambdab_{\ct}, \T(\g) \rangle \right]
	\label{eq:gradient}
\end{align}
where we have used $\g=\g(\xt)$ for brevity. The parameters are then updated as:
\begin{align}
	\g^{\text{new}} \leftarrow \g^{\text{old}} + \eta \nabla_{\g}\mathcal{\tilde{L}}(q(\mathbf{c}), \Theta) 
\end{align}
See Appendix \ref{apx:converg} for a discussion on the convergence of our algorithm.

The gradient term with respect to the determinant of the Jacobian $ \log|\J\g|$ deserves special attention. It is widely considered difficult to compute, and therefore, normalizing flows models are often used in literature in order to make the Jacobians more tractable. The problem with this approach is that, to our best knowledge, none of such flow models has universal \textit{function} approximation capabilities (despite some being universal \textit{distribution} approximators). This would restrict the possible set of nonlinear mixing functions that can be estimated, and is thus not practical for our purposes. Fortunately modern autograd packages such as JAX make it possible to calculate gradients of the log determinant Jacobian efficiently up to moderate dimensions (see Appendix \ref{apx:B}) -- this is the approach we take. Very recent, promising, alternative for computing the log-determinant is the relative gradient \citep{gresele_jacobian_2020} which could easily be implemented in our framework. Finally, notice that the second term \eqref{eq:gradient} is easy to evaluate since the expectation is just a discrete sum over the posteriors that we get from the E-step.

\subsection{COMMENT ON ESTIMATION FOR LONG TIME SEQUENCES} \label{sec:longtime}
The above estimation method may be impractical for very long time sequences since the forward-backward algorithm has computational complexity of $\mathcal{O}(TC^2)$. In such situations we can adapt the subchain sampling approach of \cite{foti_stochastic_2014}. This involves splitting up the full dataset into shorter time sequences and then forming minibatches over time. The resulting gradient updates would be biased and therefore a scaling term will be applied to them. The forward-backward algorithm applied to the subchains is also only approximate due to loss of information at the ends of the chains but the authors describe a technique to buffer the chains with extra observations to reduce this effect.

\subsection{COMMENT ON DIMENSION REDUCTION}
An important problem in applying our method on real data is dimension reduction. While in the theory above, we assumed that the number of independent components is equal to the number of observed variables, in many practical cases, we would like to have a smaller number of components than observed variables. We propose here two solutions for this problem.

The first solution, which is widely used in the linear ICA case, is to first reduce the dimension of the data by PCA, and then do ICA in that reduced space with the same dimensions of components and observed variables. In the nonlinear case, a number of nonlinear PCA methods, also called manifold learning methods, has been proposed and could be used for such a two-stage method. In particular, dimension reduction is achieved by even the very simplest autoencoders; recent work has developed the theory further in various directions \citep{maaten_visualizing_2008,vincent_stacked_2010}. This approach has the advantage of reducing the noise in the data, which is a well-known property of PCA, and allows us to separate the problem of dimension reduction from the problem of developing ICA algorithms. A possible drawback is that such dimension reduction may not be optimal from the viewpoint of estimating independent components.

The second solution is to build an explicit noise model into the nonlinear ICA model, following \citet{khemakhem_variational_2020}. Denote by $\mathbf{n}$ a random vector of Gaussian noise which is white both temporally and spatially and  of variance $\sigma^2$. Instead of the Eq.~(\ref{eq:mix}), we would define a mixing model as
\begin{align}
	\xt = \f(\st) + \mathbf{n}^{(t)}
	\label{eq:mixnoisy}
\end{align}
where the model of the components $\st$ is unchanged. We could then combine the variational estimation method presented by \citet{khemakhem_variational_2020} with the HMM inference procedure presented here. However, we leave the details for future work.

\section{IDENTIFIABILITY THEORY} \label{sec:theory}

In this section we provide identifiability theory for the model discussed in the previous section. As was discussed above, many deep latent variable models are non-identifiable. In other words, an estimation method such as the EM proposed above might not have a unique solution, or even a small number of solutions which are indistinguishable for any practical purposes.

Fortunately, we are able to combine previous nonlinear ICA theory with the identifiability of Hidden Markov Models to prove the identifiability of our combined model. Albeit our model being different from \citep{hyvarinen_nonlinear_2017}, \citep{hyvarinen_nonlinear_2019} and \citep{khemakhem_variational_2020}, the identifiability we reach is very similar. We also show that in the case of Gaussian independent components we can get exact identifiability up to linear transformation of the components. 

\subsection{DEFINITIONS}
In order to illustrate the relationship of our model's identifiability to earlier works in the area, we introduce the following definitions from \citet{khemakhem_variational_2020}
\begin{Definition}
	Let $\sim$ be the equivalence relation on $\Theta$. \eqref{eq:model_likeli} is said to be identifiable up to $\sim$ if
	\begin{align}
		p(\x^{(1)},\dots, \x^{(T)}; \Theta) = p(\x^{(1)},\dots, \x^{(T)}; \hat{\Theta}) \Rightarrow \Theta \sim \hat{\Theta}
	\end{align}
\end{Definition}

\begin{Definition} \label{def:sim}
	Let $\sim$ be the binary relation on $\Theta$ defined by:
	\begin{align}
		&(\f, \lambdab) \sim (\hat{\f}, \hat{\lambdab}) \leftrightarrow \nonumber \\
		& \exists \W, \mathbf{b} \mid \T(\g(\xt)) = \W \T(\hat{\g}(\xt)) + \mathbf{b} 
	\end{align}
	where $\W$ is an $NV \times NV$ matrix and $\mathbf{b}$ is an $NV\times1$ vector.
	
	If $\W$ is invertible, the above relation is denote by $\sim_W$, and if $\W$ is a block permutation matrix, it is denoted by $\sim_{\mathcal{P}}$. In block permutation, each block linearly transforms $\T_i(g_i(x_i))$ into $\T_j(\hat{g}_j(x_i)) \,\,$ with each $j$ corresponding to one, and only one, $i$.
\end{Definition}

\subsection{GENERAL RESULT}

Now we present our most general Theorem on identifiability. It will be followed by stronger results in the Gaussian case below.

\begin{Theorem} \label{T1}
Assume observed data is generated by a Hidden Markov Nonlinear ICA according to \eqref{eq:hmm_likeli} - \eqref{eq:model_likeli}. Also, assume:
	\begin{enumerate}[label=(\roman*)]
		\item The time-homogeneous transition matrix $\A$ has full rank and induces an irreducible \footnote{all states can be reached from every state} Markov chain with a unique stationary state distribution \label{t1:as1}
		\item The number of latent states, $C$, is known \label{t1:as2} and $C \geq NV + 1$
		\item There exists an $NV$ square matrix of the different states' parameters with respect to a pivot state 
			\begin{align}
				\widetilde{\Lambdab} = \begin{pmatrix} (\lambdab_{c=1}-\lambdab_{c=0})^T \\ \vdots \\ (\lambdab_{c=NV}-\lambdab_{c=0})^T \end{pmatrix} 
			\end{align}
			which is invertible. \label{t1:as4}
		\item The emission distributions for the different latent states $p(\xt|1; \thetab_1), \dots, p(\xt|C; \thetab_C)$ are linearly independent functions of $\xt$ \label{t1:as3}
		\item The non-linear mixing function $\f$ is bijective \label{t2:as5}
	\end{enumerate}
Then the model parameters $(\f, \lambdab)$ are $\sim_W$ identifiable.
\end{Theorem}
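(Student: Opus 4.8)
The plan is to prove the result in two stages: first invoke HMM identifiability to pin down the emission distributions as abstract measures, and then open up those emissions and run a nonlinear-ICA argument with the latent state playing the role of the auxiliary variable of \citet{hyvarinen_nonlinear_2019}. Concretely, I would first apply Theorem~\ref{T:gassiat} of \citet{gassiat_inference_2016}. Since the full data likelihood \eqref{eq:model_likeli} determines in particular the distribution of three consecutive observations, and since assumption~\ref{t1:as1} supplies a full-rank transition matrix, assumption~\ref{t1:as2} a known number of states, and assumption~\ref{t1:as3} the linear independence of the emission densities that Gassiat's result requires, it follows that whenever $\Theta=(\f,\Lambdab,\A)$ and $\hat{\Theta}=(\hat{\f},\hat{\Lambdab},\hat{\A})$ induce the same observation distribution, the emission distributions coincide up to a permutation $\sigma$ of the latent labels. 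After relabelling we may take $\sigma$ to be the identity, so that for each state $c$ and almost every $\xt$ we have $p(\xt\mid c;\f,\lambdab_c)=p(\xt\mid c;\hat{\f},\hat{\lambdab}_c)$.

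Next I would substitute the exponential-family emission form \eqref{eq:emission} (which is well defined precisely because of the bijectivity assumption~\ref{t2:as5}, so that $\g=\f^{-1}$ exists) into this equality and take logarithms, obtaining for each $c$
\[
\log|\J\g| + \log H(\g) - \log Z(\lambdab_c) + \langle \lambdab_c, \T(\g)\rangle = \log|\J\hat{\g}| + \log H(\hat{\g}) - \log Z(\hat{\lambdab}_c) + \langle \hat{\lambdab}_c, \T(\hat{\g})\rangle .
\]
The key manoeuvre is to difference this identity against a fixed pivot state $c=0$: the Jacobian term $\log|\J\g|$ and the base-measure term $\log H(\g)$ do not depend on $c$, so they cancel, leaving a relation between $\langle \lambdab_c-\lambdab_0, \T(\g)\rangle$ and $\langle \hat{\lambdab}_c-\hat{\lambdab}_0, \T(\hat{\g})\rangle$ up to an additive constant. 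Stacking these differenced equations for $c=1,\dots,NV$---possible because assumption~\ref{t1:as2} guarantees $C\geq NV+1$---yields the matrix identity $\widetilde{\Lambdab}\,\T(\g)=\hat{\widetilde{\Lambdab}}\,\T(\hat{\g})+\mathbf{k}$ for some constant vector $\mathbf{k}$. Assumption~\ref{t1:as4} makes $\widetilde{\Lambdab}$ invertible, so multiplying through gives the affine relation $\T(\g(\xt))=\W\,\T(\hat{\g}(\xt))+\mathbf{b}$ with $\W=\widetilde{\Lambdab}^{-1}\hat{\widetilde{\Lambdab}}$ and $\mathbf{b}=\widetilde{\Lambdab}^{-1}\mathbf{k}$, which is exactly the relation of Definition~\ref{def:sim}.

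The step I expect to be the main obstacle is upgrading this from the bare relation $\sim$ to $\sim_W$, i.e. showing $\W$ is invertible, since a priori we only control $\widetilde{\Lambdab}$ and not the estimated $\hat{\widetilde{\Lambdab}}$. Here I would lean on the minimal-representation assumption stated below \eqref{eq:sourc_dist}: it guarantees that the $NV$ coordinates of $\T(\g)$, together with the constant function, are linearly independent functions of $\xt$, and likewise for $\T(\hat{\g})$. The affine identity then forces each of the $NV$ linearly independent coordinates of $\T(\g)$ to lie in the span of the $NV$ coordinates of $\T(\hat{\g})$ plus a constant; this containment of $NV$-dimensional function spaces can only hold if $\hat{\widetilde{\Lambdab}}$, and hence $\W$, has full rank. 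Equivalently, rerunning the entire argument with the two models exchanged yields $\T(\hat{\g})=\W'\T(\g)+\mathbf{b}'$, and linear independence of the statistics forces $\W\W'=\I$. Either route closes the gap and establishes $\sim_W$ identifiability of $(\f,\lambdab)$.
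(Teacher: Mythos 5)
Your proposal is correct and follows essentially the same route as the paper's own proof: invoke Theorem~\ref{T:gassiat} (via assumptions \ref{t1:as1}, \ref{t1:as2}, \ref{t1:as3}) to match emissions up to permutation, difference the log-emissions against a pivot state so the Jacobian and base-measure terms cancel, stack the $NV$ resulting equations and invert $\widetilde{\Lambdab}$, then deduce invertibility of $\W$ from the linear independence of the sufficient statistics guaranteed by minimal representation. Your handling of the final step (the span/dimension argument, or alternatively the symmetric argument giving $\W\W'=\I$) is if anything slightly more explicit than the paper's, but it rests on the same idea, so there is nothing substantively different to report.
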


\begin{proof}
	Suppose we have
	\begin{align}
		p(\x^{(1)},\dots, \x^{(T)}; \Theta) = p(\x^{(1)},\dots, \x^{(T)}; \hat{\Theta})
	\end{align}

	Using assumptions \ref{t1:as1}-\ref{t1:as3}, we can invoke Theorem \ref{T:gassiat} and apply it to our model to get:
	\begin{align}
		&\hat{A}_{k,l} = A_{\sigma(k),\sigma(l)} \\
		&p(\x|k; \hat{\thetab}_k) = p(\x|\sigma (k); \thetab_{\sigma (k)}) \label{eq:marg_iden}
	\end{align}
	where superscript $t$ is dropped for convenience. For notational simplicity, and without loss of generality, we assume the components are ordered such that $k=\sigma(k)$. Substituting in $\eqref{eq:emission}$ we have:
	\begin{align}
		&|\J\hat{\g}(\x)| \frac{H(\hat{\g}(\x))}{Z(\hat{\lambdab}_{k})}\exp\{\langle \hat{\lambdab}_{k}, \T(\hat{\g}(\x)) \rangle\} \nonumber \\
		&=|\J\g(\x)| \frac{H(\g(\x))}{Z(\lambdab_{k})}\exp\{\langle \lambdab_{k}, \T(\g(\x)) \rangle \}
		\label{eq:equal_emissions}
	\end{align}
	for some latent state $k$. Recall from assumption \ref{t1:as4} that $C \geq NV+1$. We can thus take $C+1$ states and assign one of them, say $c=0$ as a pivot states. Taking logs of \eqref{eq:equal_emissions} for all the other states with respect to the pivot state gives $C$ equations of below form:
	\begin{align}
		&\langle (\lambdab_{k}- \lambdab_{1}), \T(g_i(\x)) \rangle + \log Z(\lambdab_{1})- \log Z(\lambdab_{k}) \nonumber \\
		&=\langle (\hat{\lambdab}_{k}- \hat{\lambdab}_{1}), \T(\hat{g}_i(\x)) \rangle + \log Z(\hat{\lambdab}_{1})- \log Z(\hat{\lambdab}_{c})
	\end{align}
	Collecting all the $C$ such equations, we can stack them into a linear system :
	\begin{align}
		\widetilde{\Lambdab} \T(\g(\x)) = \widehat{\widetilde{\Lambdab}}\T(\hat{\g}(\x)) + \boldsymbol{\beta}
	\end{align}
	where $\widetilde{\Lambdab}$ is the invertible square matrix defined in assumption \ref{t1:as4}, and the elements of $\widehat{\widetilde{\Lambdab}}$ are defined similarly, but no assumption about its invertibility is made. The constants that result from the sums of the log-normalizers are stacked to form $C\times 1$ vector $\boldsymbol{\beta}$. Multiplying both sides by $\widetilde{\Lambdab}^{-1}$ results in our desired form:
	\begin{align}
		&\T(\g(\x)) = \widetilde{\Lambdab}^{-1} \widehat{\widetilde{\Lambdab}} \T(\hat{\g}(\x)) + \widetilde{\Lambdab}^{-1} \boldsymbol{\beta} \nonumber \\
		&\T(\s) = \W \T(\hat{\g}(\x)) + \mathbf{b} \label{eq:identif1}
	\end{align}
	Recall that we defined the exponential families to be in minimal representation in Section \ref{sec:model_def}. It follows that we can find an arbitrary number of points such that the $V$ vectors formed by the sufficient statistic functions of each independent component $(T_{i,1}(s_i),\dots,T_{i,V}(s_i))$, are linearly independent. This can be done separately for each $s_i$. Additionally, as $s_i$ and $s_j$ can be changed independently, we can find for $i \neq j$ then $T_l(s_i)$ and $T_m(s_j)$ are linearly independent for all $l,m \in (1,\dots,V)$. Therefore, all elements of the vector $\T(\s)$ are linearly independent which implies that the square matrix $\W$ in \eqref{eq:identif1} is invertible.
\end{proof}

\subsubsection{Comments on the assumptions of Theorem 1}
The assumptions \ref{t1:as1}, \ref{t1:as2} are standard HMM assumptions. The assumption of a full rank transition matrix is non-standard but crucial here. Intuitively speaking, it allows the latent states to be distinguished from each other, while the irreducibility assumptions ensures that there is a single unique stationary state distribution.\footnote{technically one aperiodic state is also required. An aperiodic state is one which can be returned to after an irregular number of steps} Notice that these assumptions necessarily hold, for example, when the transition matrix is close to identity, as in a case where the states are strongly persistent.

The assumption that the real number of latent components is known, is valid in certain applications, and if not it could be estimated for instance be increasing the number of latent states between each estimation and then detecting the point at which increases in likelihood become marginal (the elbow method). Assumption \ref{t1:as4} is valid in practice as long as the parameters are generated randomly - in that case it almost surely holds as singular solutions will lie in a submanifold of lower dimension. The validity of assumption is less obvious \ref{t1:as3}, however, we will below prove that it holds, for instance, in the case of Gaussian independent components.

\subsection{IDENTIFIABILITY WITH GAUSSIAN INDEPENDENT COMPONENTS}
In this section, we first provide two lemmas which we use to prove the claim, already alluded to above, that assumption \ref{t1:as3} of Theorem \ref{T1} is satisfied for Gaussian components. Then, we prove that in this case a stronger form of identifiability can be reached as a special case of above results, namely that we get exact identification of components up to linear transformation. Together these results make a strong case for using Gaussian latent components in practical applications.  

We begin by stating two Lemmas (proofs in Appendix~\ref{apx:proofs}):

\begin{Lemma}
	Assumption \ref{t1:as3} of Theorem 1 requires the $C$ emission distributions defined by \eqref{eq:emission} to be linearly independent. A sufficient, and necessary, condition is that the $C$ conditional source distributions defined by \eqref{eq:sourc_dist} are linearly independent.
\label{lemma:lin_ind_transf}
\end{Lemma}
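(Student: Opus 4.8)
The plan is to exploit the fact that each emission density in \eqref{eq:emission} is obtained from the corresponding source density in \eqref{eq:sourc_dist} by the change-of-variables formula, so that the two families of $C$ functions differ only by a multiplicative factor that is common to all states. Concretely, writing $\st = \g(\xt)$, the change of variables gives $p(\xt | \ct; \thetab_{\ct}) = |\J\g(\xt)|\, p(\g(\xt) | \ct; \lambdab_{\ct})$, where the Jacobian factor $|\J\g(\xt)|$ does not depend on the state index $\ct$. The whole argument is then just the observation that a strictly positive, state-independent multiplier preserves linear (in)dependence.

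First I would take an arbitrary linear combination $\sum_{c=1}^C \alpha_c\, p(\x | c; \thetab_c)$ of the emission densities and factor out the common Jacobian term, obtaining $|\J\g(\x)| \sum_{c=1}^C \alpha_c\, p(\g(\x) | c; \lambdab_c)$. Because assumption (v) makes $\f$ bijective (hence $\g$ a diffeomorphism with nowhere-vanishing Jacobian), we have $|\J\g(\x)| \neq 0$, so the emission linear combination vanishes identically in $\x$ if and only if $\sum_{c=1}^C \alpha_c\, p(\g(\x) | c; \lambdab_c) = 0$ for every $\x$. Next I would use the bijectivity of $\g$ a second time: as $\x$ ranges over the observation space, $\s = \g(\x)$ ranges over the whole source space, so this identity is in turn equivalent to $\sum_{c=1}^C \alpha_c\, p(\s | c; \lambdab_c) = 0$ for all $\s$.

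Chaining the two equivalences shows that a nontrivial vanishing linear combination of the emission densities exists if and only if one exists for the source densities. This establishes the necessary and the sufficient direction simultaneously: the $C$ emission densities are linearly independent as functions of $\x$ precisely when the $C$ conditional source densities are linearly independent as functions of $\s$. No structural properties of the exponential family beyond the common base measure and Jacobian are needed, since everything is carried through by the bijection.

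The only delicate point -- and the step I would be most careful about -- is justifying that the common Jacobian factor is nonzero, so that it can legitimately be divided out. This requires $\g$ to be a genuine diffeomorphism rather than merely a set-theoretic bijection; I would argue that the differentiability already implicit in the density transformation of \eqref{eq:emission}, together with bijectivity, forces $|\J\g(\x)| \neq 0$ almost everywhere, which is enough because the linear-independence conditions are statements about functions up to sets of measure zero. Everything else is a direct transport of the linear-dependence relation through the bijection.
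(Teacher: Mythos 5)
Your proposal is correct and follows essentially the same route as the paper's proof: both transport a linear combination of densities through the bijection $\g$, using the fact that the Jacobian factor $|\J\g(\x)|$ is common to all $C$ states. The only difference is one of presentation---you treat both directions at once as a chain of equivalences and explicitly justify dividing out the almost-everywhere nonvanishing Jacobian, a point the paper leaves implicit when it states that ``necessity follows easily by the reverse of above argumentation.''
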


\begin{Lemma}
	Assume $K$ probability density functions of $N$ random variables $p_1(z_1, \dots, z_N), \dots, p_K(z_1, \dots, z_N)$, and that each factorizes across the variables: $p_k(z_1, \dots, z_N) =\prod_{i=1}^N p_k^{(i)}(z_i) \, \, \forall k \in \{1, \dots, K\}$. If the $K$ factorial density functions $p_1^{(i)}(z_i), \dots, p_K^{(i)}(z_i)$ are linearly independent for some $i \in \{1, \dots, N\}$, then the $K$ joint-density functions $p_1(z_1, \dots, z_N), \dots, p_K(z_1, \dots, z_N)$ are linearly independent.
	\label{lemma:lin_ind_factor}
\end{Lemma}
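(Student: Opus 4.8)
The plan is to work directly from the definition of linear independence: I would show that any vanishing linear combination of the joint densities is forced to have all-zero coefficients. Concretely, suppose there exist scalars $\alpha_1, \dots, \alpha_K$ with $\sum_{k=1}^K \alpha_k p_k(z_1,\dots,z_N) = 0$ for (almost) every $(z_1,\dots,z_N)$, and aim to conclude $\alpha_k = 0$ for all $k$. Substituting the assumed factorization turns this into the pointwise identity $\sum_{k=1}^K \alpha_k \prod_{j=1}^N p_k^{(j)}(z_j) = 0$.

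The core idea is to marginalize out every coordinate except the distinguished index $i$ for which the factors are assumed linearly independent. Integrating both sides over all $z_j$ with $j \neq i$ and exchanging the finite sum with the integral, each factor $p_k^{(j)}$ with $j\neq i$ is a one-dimensional probability density and therefore integrates to one. This collapses the identity to $\sum_{k=1}^K \alpha_k p_k^{(i)}(z_i) = 0$ for almost every $z_i$. At that point the hypothesis that $p_1^{(i)}, \dots, p_K^{(i)}$ are linearly independent functions of $z_i$ immediately forces $\alpha_k = 0$ for every $k$, which is precisely the claim.

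The only step requiring genuine care is the marginalization. I would justify interchanging the finite summation with the integral by linearity (integrability is immediate since each $p_k$ is a density), and then confirm that each factor $p_k^{(j)}$ integrates to a nonzero constant. Under the natural reading that the factors are themselves normalized marginal densities, this constant is exactly one. More generally, if only the joint densities are assumed normalized, each factor still integrates to some strictly positive finite constant $c_k^{(j)}$: strict positivity holds because a factor integrating to zero would be zero almost everywhere, making the whole product and hence $p_k$ vanish identically, contradicting that $p_k$ is a density. In that case the marginalized identity reads $\sum_{k} \alpha_k \big(\prod_{j\neq i} c_k^{(j)}\big) p_k^{(i)}(z_i)=0$, and since the positive prefactors are nonzero, linear independence of the $p_k^{(i)}$ again yields $\alpha_k = 0$. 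I expect this normalization bookkeeping, rather than any deep argument, to be the only subtlety; the substance of the lemma is simply this reduction from the $N$-dimensional factorial problem to the one-dimensional independence hypothesis via integrating out the remaining coordinates.
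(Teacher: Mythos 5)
Your proof is correct, but it takes a genuinely different route from the paper's. The paper argues by induction on the number of variables: starting from the one-dimensional independence of $p_1^{(i)},\dots,p_K^{(i)}$, it assumes independence of the joints in $z_i,\dots,z_{i+n}$ and shows that a vanishing combination $\sum_k w_k\, p_k^{(i+n+1)}(z_{i+n+1})\, p_k(z_i,\dots,z_{i+n}) \equiv 0$ with some $w_k \neq 0$ would, upon fixing $z_{i+n+1}$ at a point where the relevant factor is nonzero, yield a vanishing combination of the smaller joints with coefficients $v_k = w_k\, p_k^{(i+n+1)}(z_{i+n+1})$ not all zero, contradicting the induction hypothesis. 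Your argument instead collapses the $N$-dimensional identity to the one-dimensional hypothesis in a single step by integrating out all coordinates $j \neq i$. The trade-offs are as follows. Your route leans on integrability: you need Fubini--Tonelli both to justify that the a.e.\ identity survives partial integration and to split the integral of each product into a product of integrals; this is free here because the $p_k$ are densities, and your bookkeeping of the constants $c_k^{(j)} \in (0,\infty)$ is the right way to handle the case where only the joints are normalized (finiteness of each $c_k^{(j)}$ also follows from Tonelli, since the product of the factor integrals must equal $\int p_k = 1$, so no factor can have infinite integral once none has zero integral). The paper's route avoids integration entirely and is purely pointwise, so in principle it applies to factorized nonnegative functions that need not be normalized or even integrable; the price is the slightly delicate step of selecting an evaluation point where $p_k^{(i+n+1)}$ is nonzero for some $k$ with $w_k \neq 0$ (a set of positive measure, since that factor is not a.e.\ zero), a subtlety the paper passes over quickly and which your integration sidesteps cleanly. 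Both arguments are elementary and both reduce the claim to the assumed one-dimensional linear independence, so either would serve the paper's purpose.
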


Based on these Lemmas, we can prove the following Theorem (proof in Appendix \ref{apx:proofs}):

\begin{Theorem} \label{the:lind}
	Assume that distributions of the independent components conditional on the latent state, as defined by \eqref{eq:sourc_dist}, are Gaussian parameterised by mean and variance.  Assume also that the means of the $C$ density functions are all different. Then the emission distributions, defined by \eqref{eq:emission}, are linearly independent, thus satisfying assumption \ref{t1:as3} in Theorem \ref{T1}.
\end{Theorem}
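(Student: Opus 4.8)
The plan is to chain the two lemmas so as to collapse the $N$-dimensional, state-conditional linear-independence question down to a one-dimensional Gaussian computation. First I would apply Lemma~\ref{lemma:lin_ind_transf}: since the mixing $\f$ (equivalently $\g$) is a fixed bijection shared across all states, linear independence of the $C$ emission densities in \eqref{eq:emission} is equivalent to linear independence of the $C$ conditional source densities $p(\st\mid c;\lambdab_{c})$ in \eqref{eq:sourc_dist}. This removes the Jacobian and mixing terms entirely and leaves a statement purely about the factorial priors.

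Second I would apply Lemma~\ref{lemma:lin_ind_factor} with $K=C$ and $z_i=s_i$: the conditional source densities factorize over the components, so to prove the $C$ joint densities linearly independent it suffices to exhibit a single coordinate $i$ for which the $C$ univariate marginals $p(s_i\mid c)$ are linearly independent. Using the hypothesis that the means differ, I would pick a coordinate $i$ in which the scalar means $\mu_{i,1},\dots,\mu_{i,C}$ are pairwise distinct, so the remaining task is the purely one-dimensional fact that $C$ univariate Gaussian densities with pairwise distinct means are linearly independent.

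Third, for the one-dimensional core I would argue as follows. If all the variances $\sigma^2_{i,c}$ happen to coincide, factor out the common envelope $e^{-s_i^2/2\sigma^2}$ from $\sum_c a_c\,\phi(s_i;\mu_{i,c},\sigma^2)$; what remains is a linear combination of exponentials $e^{\mu_{i,c}s_i/\sigma^2}$ with pairwise distinct rates, and exponentials with distinct rates are linearly independent (a Wronskian/Vandermonde argument), forcing every $a_c=0$. For the general case with unequal variances I would pass to Fourier transforms, where each Gaussian becomes $\exp(i\mu_{i,c}\omega-\tfrac12\sigma^2_{i,c}\omega^2)$; grouping the terms by variance and letting $\omega\to\infty$, the smallest-variance group dominates and reduces to a nonvanishing almost-periodic combination of distinct characters $e^{i\mu_{i,c}\omega}$, which cannot tend to zero unless its coefficients vanish, and peeling off variance groups in increasing order kills all coefficients in turn. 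Combining the three steps yields linear independence of the emissions, i.e.\ assumption~\ref{t1:as3}.

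The step I expect to be the main obstacle is the bridge into Lemma~\ref{lemma:lin_ind_factor}: the lemma needs one coordinate in which all $C$ marginals are linearly independent, whereas ``the means are all different'' reads most naturally as distinctness of the mean \emph{vectors}, which does not by itself guarantee a single coordinate with pairwise-distinct scalar means (for instance $(0,0),(1,0),(0,1)$ are distinct vectors yet every coordinate repeats a value). I would therefore either read the hypothesis coordinate-wise (distinct means in at least one component across all states), or, failing that, replace the Lemma~\ref{lemma:lin_ind_factor} step by a direct argument that diagonal-covariance Gaussians with distinct mean vectors are linearly independent. The univariate linear-independence fact itself is standard and not the real difficulty; the care lies in honestly matching the hypothesis to what the factorization lemma consumes, and in the unequal-variance bookkeeping of the asymptotic Fourier argument.
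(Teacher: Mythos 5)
Your proposal is correct and follows the paper's overall architecture exactly---Lemma~\ref{lemma:lin_ind_transf} to strip away the mixing and Jacobian, then Lemma~\ref{lemma:lin_ind_factor} to collapse the question to a single coordinate---but your one-dimensional core is a genuinely different argument. The paper stays entirely in real space: it writes the univariate Gaussians in exponential-family form $\exp\{\eta_{c,1}s-\eta_{c,2}s^2\}/Z_c$, orders the states by the quadratic coefficient (i.e.\ by variance), divides the alleged linear dependence through by the dominant (largest-variance) density, and takes $s\to+\infty$ to annihilate coefficients one at a time, handling three sub-cases separately (all variances distinct; all variances equal, where the distinct means take over; and ties among the largest variances, where the two divisions are interleaved). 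Your route instead passes to characteristic functions, multiplies by $e^{\sigma_{\min}^2\omega^2/2}$ so that all larger-variance groups decay, and kills the surviving trigonometric polynomial with distinct frequencies by almost-periodicity; this absorbs the paper's case analysis into a single grouping step, at the price of invoking the mean-value property of trigonometric polynomials where the paper needs only elementary limits. Both arguments consume exactly the same input: in the chosen coordinate, states sharing a variance must have pairwise distinct means.

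The obstacle you flagged is real, and it is worth noting that the paper's own proof does not resolve it either: after reducing to one component, the paper simply says ``recall we assumed that the means are different,'' silently adopting the coordinate-wise reading of the hypothesis. Under the mean-\emph{vector} reading, your example (means $(0,0),(1,0),(0,1)$ with equal variances) defeats the Lemma~\ref{lemma:lin_ind_factor} route through every coordinate, even though the conclusion of Theorem~\ref{the:lind} remains true (finitely many pairwise distinct multivariate Gaussian densities are always linearly independent). So your proposed fallback---a direct argument for diagonal-covariance Gaussians with distinct mean vectors---is not merely a precaution; it would repair a gap that the published proof shares, whereas reading the hypothesis coordinate-wise reproduces what the paper actually proves.
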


Finally, we have the following Theorem which proves a stronger form of identifiability---essentially recovering the components with minimum indeterminacy---of our Hidden Markov Nonlinear ICA model in the Gaussian case (proof in Appendix~\ref{apx:proofs})
\begin{Theorem} \label{the:exact} 
	Assume that the latent independent components have a conditionally Gaussian distributions, and assume hypotheses \ref{t1:as1}, \ref{t1:as2},\ref{t1:as4} and \ref{t2:as5} of Theorem~\ref{T1} hold, as well as the assumptions of Theorem \ref{the:lind}. Additionally assume that the mixing function $\f$ has all of it second-order cross derivatives, then the components in our Hidden Markov Nonlinear ICA model are exactly identified up to linear transformation.
\end{Theorem}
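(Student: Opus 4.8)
The plan is to bootstrap from the general result. First I would specialize Theorem~\ref{T1} to the Gaussian case, where the minimal sufficient statistics are $\T_i(s_i)=(s_i,s_i^2)$, so that $V=2$ and $\T(\s)$ is a $2N$-vector. Assumption~\ref{t1:as3} is exactly what Theorem~\ref{the:lind} delivers for Gaussian components with distinct means, so all hypotheses of Theorem~\ref{T1} are in force and I obtain the $\sim_W$ relation $\T(\g(\x)) = \W\,\T(\hat{\g}(\x)) + \mathbf{b}$ with $\W$ an invertible $2N\times 2N$ matrix. Because both source models have full support $\R^N$ and $\hat{\g}$ is a bijection onto $\R^N$ (by assumption~\ref{t2:as5}), I may substitute $\x=\hat{\g}^{-1}(\hat{\s})$ and study the induced source-to-source map $\vv = \g\circ\hat{\g}^{-1}$, writing $\s=\vv(\hat{\s})$.

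Writing the relation row by row, each component yields a pair of scalar identities valid for all $\hat{\s}\in\R^N$: one coming from the mean statistic, $s_i = \sum_{j}(\alpha_{ij}\hat{s}_j + \beta_{ij}\hat{s}_j^2) + \mathrm{const}$, and one from the variance statistic, $s_i^2 = \sum_{j}(\gamma_{ij}\hat{s}_j + \delta_{ij}\hat{s}_j^2) + \mathrm{const}$, where the coefficients are entries of $\W$. The crucial structural observation is that the right-hand side of every such identity contains only the monomials $\hat{s}_j$ and $\hat{s}_j^2$: no genuine cross terms $\hat{s}_j\hat{s}_k$ and no higher powers can appear.

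The core step is then purely algebraic. I would square the first identity and equate it to the second. Expanding the square produces a quartic polynomial in $\hat{\s}$ whose $\hat{s}_j^4$ coefficient equals $\beta_{ij}^2$, whereas the second identity has no quartic terms. Since the components of $\hat{\s}$ range freely over $\R^N$ (full Gaussian support) and distinct monomials $\prod_j \hat{s}_j^{m_j}$ are linearly independent as functions, matching coefficients forces $\beta_{ij}=0$ for all $i,j$. Hence the quadratic part of the mean identity vanishes, $s_i = \sum_j \alpha_{ij}\hat{s}_j + \mathrm{const}$, i.e. $\s=\M\hat{\s}+\mathbf{c}$ is affine. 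Finally, $\vv=\g\circ\hat{\g}^{-1}$ is a bijection as a composition of bijections; an affine bijection of $\R^N$ has invertible linear part, so $\M$ is invertible, giving identification of the components up to an invertible linear transformation.

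The main obstacle, and the place where the smoothness hypothesis on $\f$ enters, is making the coefficient matching rigorous and legitimate. I must ensure that $\vv$ is well defined and (twice) differentiable so that the substitution and the polynomial manipulations hold on an open set; requiring $\f$ (hence $\g,\hat{\g},\vv$) to possess its second-order cross derivatives guarantees this and, equivalently, lets me run the same argument through Jacobians by differentiating the mean identity twice in $\hat{s}_j,\hat{s}_k$ and reading off $\partial^2 v_i/\partial\hat{s}_j\partial\hat{s}_k$. The only remaining subtlety beyond bookkeeping is justifying that $\hat{s}_1,\dots,\hat{s}_N$ may be varied independently so that monomials separate; this follows from the conditional independence and full support of the Gaussian components, which is precisely why the Gaussian assumption, together with the distinct means supplied by Theorem~\ref{the:lind}, is what upgrades the $\sim_W$ result to source-level linear identifiability.
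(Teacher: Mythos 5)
Your proof is correct, but it takes a genuinely different route from the paper's. The paper first upgrades the $\sim_W$ conclusion of Theorem~\ref{T1} to block-permutation identifiability $\sim_{\mathcal{P}}$ by invoking Theorem 2 of \citet{khemakhem_variational_2020} --- that external step is precisely where the twice-differentiable Gaussian sufficient statistics and the hypothesis on $\f$'s second-order cross derivatives are consumed --- and then runs the squaring/quartic-coefficient argument inside each resulting $2\times 2$ block, obtaining the per-component form $s_i = w_{ij}g_j(\x)+b_i$. You instead apply the squaring argument directly to the full $2N\times 2N$ relation $\T(\g(\x)) = \W\,\T(\hat{\g}(\x))+\mathbf{b}$ after the substitution $\x=\hat{\g}^{-1}(\hat{\s})$: matching the $\hat{s}_j^4$ coefficients forces every quadratic entry $\beta_{ij}$ to vanish, giving $\s = \M\hat{\s}+\mathbf{c}$ with $\M$ invertible because $\g\circ\hat{\g}^{-1}$ is a bijection. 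This is self-contained (no appeal to an external identifiability theorem) and, notably, never actually uses the smoothness hypothesis: your closing paragraph assigns it a role your own argument does not need, since the substitution requires only bijectivity and the coefficient matching is an exact polynomial identity, not a differential one. Two further remarks. First, your conclusion (components recovered up to an invertible affine transformation) matches the theorem's literal statement but is weaker than what the paper's proof delivers, namely a scaled permutation plus offset. Second, your argument yields that stronger conclusion essentially for free: once $\beta_{ij}=0$, squaring the now-affine mean identity produces cross terms $2\alpha_{ij}\alpha_{ik}\hat{s}_j\hat{s}_k$ for $j\neq k$ that are absent from the variance identity, so each row of $\M$ has at most one nonzero entry, hence by invertibility exactly one --- recovering the paper's per-component identification without any differentiability assumption on $\f$ at all.
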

Notice that this proof and the identifiability result is similar to that in \citep{sorrenson_disentanglement_2020}, although our models are entirely different. These authors also prove a general version for other distributions with different sufficient statistics.

\section{EXPERIMENTS}
In this section we present results from our simulations on artificial non-stationary data. Code, written in JAX, is available at \url{github.com/HHalva/hmnlica}. 

\textbf{Dataset:}
We generated a synthetic dataset from the model defined in Section \ref{sec:model_def}. More specifically, the independent components are created from non-stationary Gaussian emission distributions of an HMM with $C$ discrete states -- the latent state determines the means and the variances of the independent components at each time point. The transition matrix was defined so that at each time-step there was a 99\% probability that the state didn't change and a 1\% probability the latent state switches to another state.\footnote{for context, the probability of staying in the same state for over 100 time steps with these numbers is around 37\%} If it switches to another state, it will always go to the next one `in line', where we define a circular ordering for the states. That is, we defined a circular repeating path for the latent state where transitions could only happen to two states such that the transition matrix is close to identity (Figure \ref{fig:sdata} illustrates this). These settings were chosen to ensure the HMM assumptions of Theorem \ref{T1} hold, as well as to reflect a situation where a relatively small number of states repeats over time with some interesting, non-random, temporal dynamics, including persistence to stay in the same state. The mean and variance parameters were chosen at random for each latent state before data generation so that assumption \ref{t1:as4} of Theorem \ref{T1} holds. Similarly to \cite{hyvarinen_nonlinear_2019}, the mixing function \eqref{eq:mix} was simulated with a randomly initialized, invertible \footnote{invertibility achieved by having all layers $N$ units wide and utilizing leaky ReLUs} multi-layer perceptron (MLP) -- this produced the observed data for our experiments. The sequences that were created are 100,000 time steps long. The number of latent states was set such that $C = 2N+1$, which ensures that the assumptions \ref{t1:as2} and \ref{t1:as4} were fulfilled. 

\begin{figure}[h]
	\centering
	\includegraphics[scale=0.8]{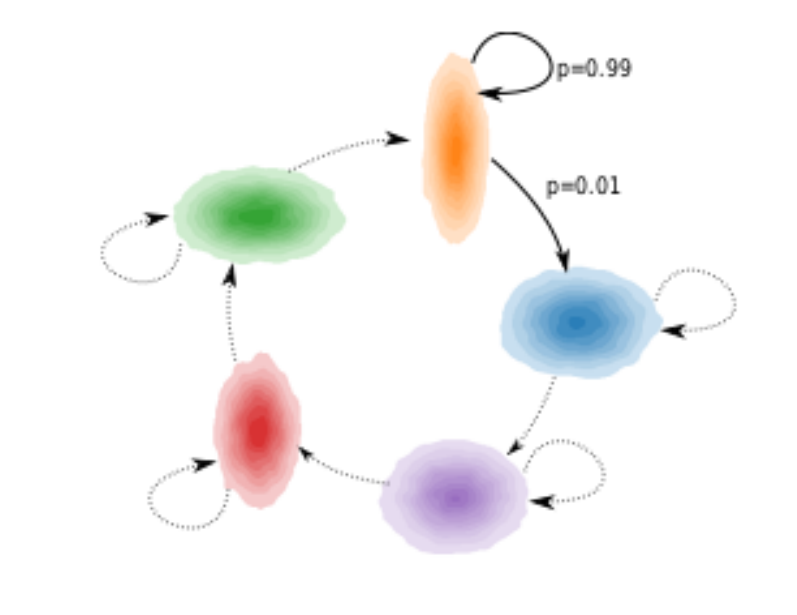}
\caption{An example of the independent components' distributions from a HMM where the number of components $N=2$ and the number of latent states $C=5$. The clusters are ordered to illustrated the dynamics of the hidden Markov model, in particular its circular property. The transition probabilities are the same throughout the data.}
\label{fig:sdata}
\end{figure}

\textbf{Model estimation:}
We estimate the (inverse) mixing function and distribution parameters of our Hidden Markov nonlinear-ICA model using the EM algorithm described in Section \ref{sec:estimation}. Mean and variance parameter estimates for the independent components are initialized randomly at the start of the EM algorithm. The inverse mixing function is parameterized with an MLP where the number hidden layers is set to match the number of data generating mixing layers. The gradient M-step are taken with the Adam optimizer \citep{kingma_adam_2017}. Random restarts were used to avoid inferior local maxima. Further, we found that a stochastic version of our algorithm (see Section \ref{sec:longtime}) converged faster -- thus, the experiments here have been run with 100 time-step long sub-sequences in minibatches of 64.

\textbf{Results -- independent component recovery:} After estimating the model parameters and independent components, a linear sum assignment problem is solved to optimally match each of the estimated components to one of the real ones. This is necessary as the ordering of the components is arbitrary. Mean absolute correlation coefficients over the resulting pairs of true and estimated components are then used to measure how well original components were recovered. This is the methodology taken in previous nonlinear ICA works \citep{hyvarinen_unsupervised_2016}.

Figure \ref{fig:results} shows the mean correlation between the estimated components for our model in comparison to TCL, which is the only other nonlinear ICA model for non-stationary data. For TCL, the data was split into 500 time-step long segments; 500 steps provided best performance relative to other computationally feasible options (100, 250, 750, 1000). We can see that our model outperforms TCL for all levels of nonlinearity. This validates our theoretical arguments that the TCL framework struggles with non-stationary data in which latent states (often a relatively small number) repeat over time since the segments it has access to don't correspond well with the true data generating process.

\textbf{Results -- temporal dynamics} Unlike previous models, Hidden Markov Nonlinear ICA is able to perform unsupervised clustering of latent states and to take into account the learned temporal dynamics in doing so. To estimate this ability, we ran the well-know Viterbi algorithm \citep{viterbi_error_1967} which finds the most likely path of latent states based on our estimated model. The results show that on average for $N=5$ and $C=11$ our model reaches near perfect classification accuracy in the linear ICA case, mean accuracy of around 80\% for $L=2$, and 68\% for $L=4$, thus clearly outperforming random chance level (figure in Appendix \ref{apx:pred}).   

\begin{figure}[h]
	\centering
	\includegraphics[scale=0.5]{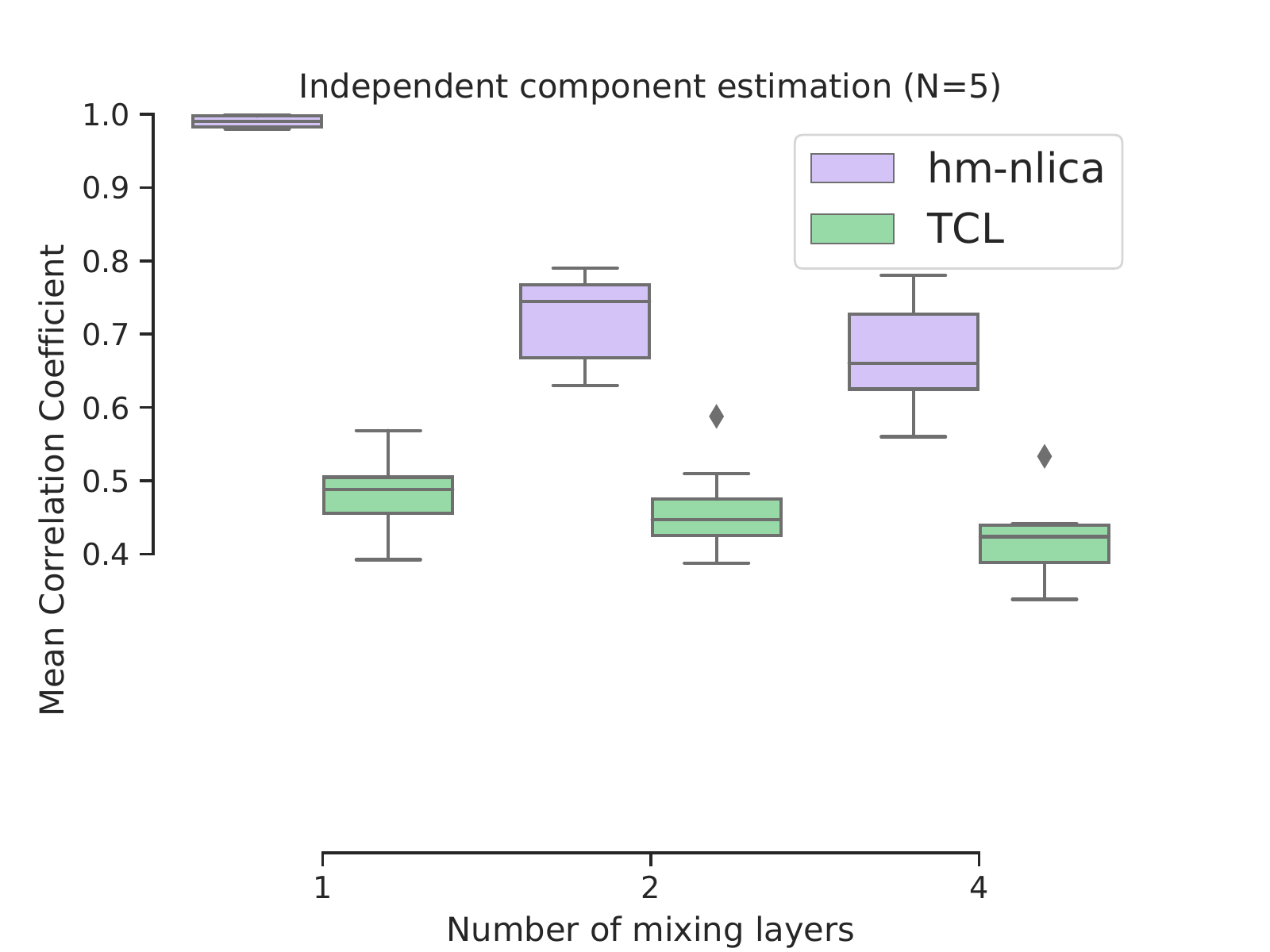}
\caption{Performance of our Hidden Markov nonlinear ICA vs. TCL in recovering the true sources for N=5 for our synthetic dataset. The amount of nonlinearity is controlled by number of hidden layers in the mixing MLP, so that $L \in [1, 2, 4]$.}
\label{fig:results}
\end{figure}

\section{CONCLUSION}
We proposed a framework nonlinear ICA based on a Hidden Markov Model of the temporal dynamics. This improves on existing nonlinear ICA methods in several ways. First, it removes the need for any arbitrary segmentation of the data as in TCL, which is likely to improve the estimation of the demixing function. Second, it leverages the fact that the nonstationary structure is often repeating with a limited number of hidden states, which not only reduces the computation by limiting the number classes, but again is likely to improve estimation of the demixing function. Third, our method estimates the underlying latent temporal dynamics, which are often interesting in their own right. We believe this in an important advance in order to apply nonlinear ICA methods on real data.

\subsubsection*{Acknowledgements}
The authors would like to thank Elisabeth Gassiat, Ilyes Khemakhem and Ricardo Pio Monti for helpful discussion. I.K.'s help with the experiments is also much appreciated. A.H. was supported by a Fellowship from CIFAR, and from the DATAIA convergence institute as part of the ``Programme d'Investissement d'Avenir" (ANR-17-CONV-0003) operated by Inria.

\newpage

\bibliography{nlica_hmm}
\bibliographystyle{apalike}

\clearpage
\onecolumn
\appendix


  \textit{\large Appendix for}\\ \ \\
  {\large \textbf{Hidden Markov Nonlinear ICA for Unsupervised Learning from Nonstationary Time Series}}\ (published at UAI 2020)\\
\vspace{.2cm}

\section{Note on the convergence of our estimation algorithm} \label{apx:converg}
Standard theory \citep{dempster_em_1977} shows that each EM iteration increases the likelihood, unless parameters are already at a zero-gradient point. Further, maxima of free-energy and likelihood coincide. This also holds under the gradient M-steps in our algorithm (with classical assumption of sufficiently small step size). Under suitable regularity conditions, theoretical limit of infinite data and universal approximation of the nonlinear transformation, combined with our identifiability proof, MLE guarantees convergence to correct parameters up to the equivalence class identified in our Theorem 2. In practice, however, these assumptions may not be satisfied --- for instance, parameters may approach a boundary point and likelihood tend to infinity. Random restarts and regularisation are common strategies to avoid these problems.

\section{Note on the compute time of the gradients of the logdet Jacobian} \label{apx:B}
We estimate the non-linear mixing function in our model using a multi-layer perceptron without any restrictions on it. As a consequence of the change of variable formula for probability densities, we have to calculate the gradient of the log-determinant of the Jacobian as part of our parameter updates. JAX, a new machine learning package that utilizes autograd, has the ability to calculate the Jacobian in just a single forward pass thus making the computations efficient for typical data dimensions. For our model, we can see that the compute time required for the log-determinant of the Jacobian starts to dominate as we approach 100 dimensions and above.
\begin{figure}[h]
	\centering
	\includegraphics[scale=0.5]{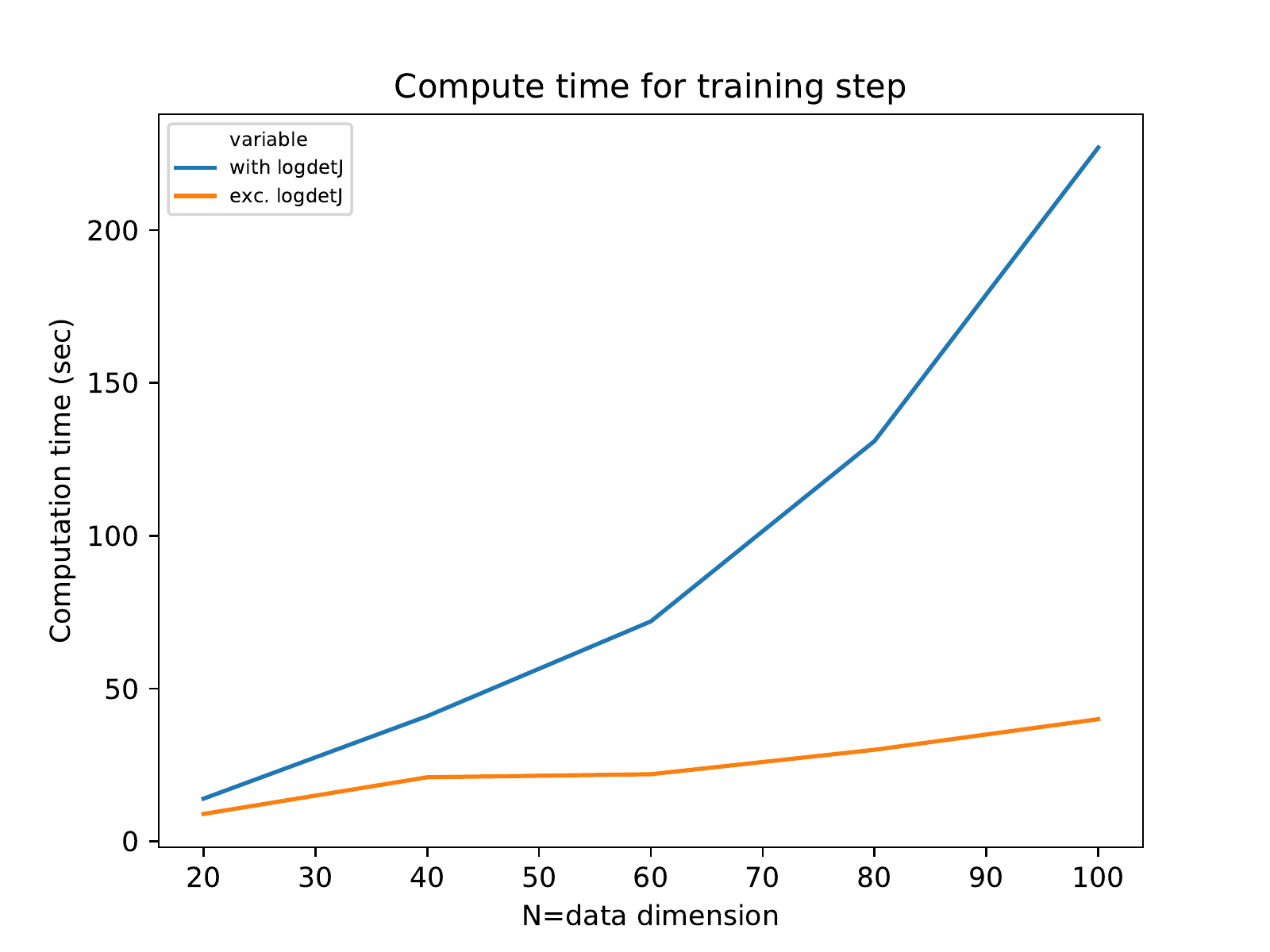}
	\caption{Average computation time over 100 epochs for computing the gradients of the function estimator in our model, including and excluding the log-determinant Jacobian term, in JAX. The function estimator is a four layer deep neural network where the width of the hidden units is always equal to $N$.}
\label{fig:compute_time}
\end{figure}

\newpage
\section{Proofs} \label{apx:proofs}

\textbf{Proof for Lemma \ref{lemma:lin_ind_transf}}
\begin{proof}
	Assume that we have linear independence of the $C$ conditional source distributions as defined by \eqref{eq:sourc_dist}. Then we have that:
	\begin{align}
		a_1 p_S(\s|1; \lambdab_{1}) + \dots + a_C p_S(\s|C; \lambdab_{C}) \equiv 0 \Rightarrow \mathbf{a}=\mathbf{0}
	\end{align}
	Above holds if we multiply it through by the Jacobian determinant of the mixing function as we have assumed bijectivity, that is:
	\begin{align}
		&a_1 |\J\g(\x)| p_S(\s|1; \lambdab_{1}) + \dots \nonumber \\
		&+ a_C |\J\g(\x)| p_S(\s|C; \lambdab_{C}) \equiv 0 \Rightarrow \mathbf{a}=\mathbf{0}
	\end{align}
	which is equivalent to:
	\begin{align}
		&a_1 |\J\g(\x)| p_S(\g(\x)|1; \lambdab_{1}) + \dots \nonumber \\
		&+ a_C |\J\g(\x)| p_S(\g(\x)|C; \lambdab_{C}) \equiv 0 \Rightarrow \mathbf{a}=\mathbf{0}
	\end{align}
	And therefore by \eqref{eq:emission} we have:
	\begin{align}
		&a_1 p_X(\x|1; \f, \lambdab_{1})+ \dots \nonumber \\
		&+ a_C p_X(\x|C; \f, \lambdab_{C}) \equiv 0 \Rightarrow \mathbf{a}=\mathbf{0}
	\end{align}
	So the emission distributions are linearly independent if the densities for the independence components are linearly independent across the $C$ different latent states. Necessity follows easily by the reverse of above argumentation. 
\end{proof}

\textbf{Proof for Lemma \ref{lemma:lin_ind_factor}}
\begin{proof}
	Assume linear independence of the $K$ joint-density functions for some subset of variables $z_i, \dots, z_{i+n}$, where $i \in \{1, \dots, N\}$ and $ 0 \leq n \leq N-i-1$, that is:
	\begin{align}
		&w_1 p_1(z_i, \dots, z_{i+n})+\dots+w_K p_K(z_i, \dots, z_{i+n}) \nonumber \\
		& = w_1 \prod_{j=i}^{i+n} p_1^{(j)}(z_j)+\dots+w_K \prod_{j=i}^{i+n} p_K^{(j)}(z_j) \equiv 0 \nonumber \\
		&\Rightarrow \w = \mathbf{0}
	\end{align}
	Now the linear independence for joint of $p_k(z_i, \dots, z_{i+n}, z_{i+n+1})$ requires:
	\begin{align}
		&w_1 p_1(z_i, \dots, z_{i+n}, z_{i+n+1})+ \nonumber \\
		&\dots+w_K p_K(z_i, \dots, z_{i+n}, z_{i+n+1}) \equiv 0 \nonumber \Rightarrow \w = \mathbf{0}
	\end{align}
	Using the factorial form of the joint, we can rewrite this as:
	\begin{align}
		&w_1 p_1^{(i+n+1)}(z_{i+n+1})p_1(z_i, \dots, z_{i+n}) + \dots \nonumber \\
		&+ w_K p_K^{(i+n+1)}(z_{i+n+1})p_K(z_i, \dots, z_{i+n}) \nonumber \\
		&\equiv 0 \Rightarrow \w = \mathbf{0}
	\end{align}
	If this didn't hold we could define $K$ constants $v_k:=w_k p_k^{(i+n+1)}(z_{i+n+1})$ such that:
	\begin{align}
		&v_1 p_1(z_i, \dots, z_{i+n}) + \dots + v_K p_K(z_i, \dots, z_{i+n}) \equiv 0 	
	\end{align}
	where the constants are not all zero which would contradict our original assumption. Thus it is sufficient to prove linear independence of $p_1^{(i)}(z_i), \dots, p_K^{(i)}(z_i)$, say for $i=1$, without loss of generality, and then apply the above induction step to guarantee linear independence of the $K$ joint-density functions $p_1(z_1, \dots, z_N), \dots, p_K(z_1, \dots, z_N)$ .
\end{proof}

\textbf{Proof for Theorem \ref{the:lind}}
\begin{proof}
	By Lemma \ref{lemma:lin_ind_transf} it is sufficient to prove the linear independence of the $C$ different conditional independent component density functions, rather than emission densities. And by Lemma \ref{lemma:lin_ind_factor} it suffices to prove this only for any one of the $N$ different independent components. In exponential family form, the density is written as (see Appendix \ref{apx:A}):
\begin{align}
	p(s_i|c)=\frac{1}{\sqrt{2\pi}}\frac{\exp\{\eta_{i, c, 1} s_i - \eta_{i, c, 2} s_i^2\}}{Z_{i, c}}
\end{align}
where $\eta_{i, c, 2} > 0 \, \forall c \in \{1, \dots, C\}$. We drop subscript $i$ for convenience. Consider:
\begin{align}
 &w_1\frac{1}{\sqrt{2\pi}}\frac{\exp\{\eta_{1, 1} s - \eta_{1, 2} s^2\}}{Z_{1}}+ \dots \nonumber \\
 &+w_C\frac{1}{\sqrt{2\pi}}\frac{\exp\{\eta_{C, 1} s - \eta_{C, 2} s^2\}}{Z_{C}} = 0
\end{align}

First assume, all the $\boldsymbol{\eta}_c$ are distinct. Also we can assume, without loss of generality, that the $C$ latent states are ordered such that $ \eta_{1, 2} < \eta_{2, 2} <\dots < \eta_{C, 2}$. We can divide all the terms with the first density to give:
\begin{align}
 &w_1+ w_2 \frac{Z_{1}}{Z_{2}}\exp\{(\eta_{1, 1}-\eta_{2, 1}) s + (\eta_{1, 2}-\eta_{2, 2}) s^2\}+\dots \nonumber \\
 &+ w_C \frac{Z_{1}}{Z_{C}}\exp\{(\eta_{1, 1}-\eta_{C, 1}) s + (\eta_{1, 2}-\eta_{C, 2}) s^2\}=0
\end{align}
taking $\lim_{s \rightarrow +\infty}$ of above gives $w_1=0$. Repeatedly performing this process for remaining terms eventually gives $\w=\mathbf{0}$. Consider now the opposite case in which all the $\boldsymbol{\eta}_c$ are equal. Then we have:
\begin{align}
 &w_1\frac{1}{\sqrt{2\pi}}\frac{\exp\{\eta_{1, 1}\}}{Z_{1}}+ \dots + w_C\frac{1}{\sqrt{2\pi}}\frac{\exp\{\eta_{C, 1}\}}{Z_{C}} = 0
\end{align}
If we re-order the terms such that $\eta_{1, 1}$ is the largest (recall we assumed that the means are different). We can again divide everything by this term, take $\lim_{s \rightarrow +\infty}$, and establish $w_1=0$ and repeat the process to get $\w=\mathbf{0}$. In the the final case where more than one component has the highest variance, but rest are unequal, (only the equality of the largest variances of the remaining terms matters), we can first perform the variance division followed by division by the largest mean, repeatedly until $\w = \mathbf{0}$. 
\end{proof}
\clearpage
\textbf{Proof for Theorem~\ref{the:exact}}
\begin{proof}
By Theorem~\ref{the:lind}, the above assumptions suffice for Theorem~\ref{T1} to hold. Next, note that the sufficient statistics of a Gaussian distribution are twice differentiable. This, combined with the assumption about the existence of $\f$'s cross-derivatives fulfils the conditions of Theorem 2 of \cite{khemakhem_variational_2020} and thus our model's parameters are $\sim_{\mathcal{P}}$ identifiable (as per Definition \ref{def:sim}). We therefore have:
\begin{align}
	\begin{pmatrix} s_i \\ s_i^2 \end{pmatrix} = \W_j \begin{pmatrix} g_j(\x) \\  g_j(\x)^2 \end{pmatrix} + \mathbf{b_i}
\end{align}
for some $i, j$. Hence, we have
\begin{align}
	&(w_{11}g_j(\x) + w_{12}g_j(\x)^2 + b_{11})^2 \nonumber \\
	&= w_{21}g_j(\x) + w_{22}g_j(\x)^2 + b_{21} \nonumber \\
	& w_{12}^2 z^4 + 2w_{11}w_{12}z^3 + (w_{11}^2 - w_{22}) z^2 - w_{21} z + b = 0
\end{align}
Above has to hold for all values of $z=g_j(\x)$. The trivial solution of all $w_{ij}=0$ is impossible as $\W$ would not be invertible. Therefore, it must be that $w_{12}=w_{21}=0$ and $w_{11}^2 = w_{22}$. Thus we have exact identification (up to linear transformation) $s_i = w_{ij} g_j(\x) + b_i$ for some constants $w_{ij}, b_i$.
\end{proof}

\section{Model with Gaussian independent components} \label{apx:A}
\begin{align}
	p(s_i|c)=&\frac{1}{\sqrt{2\pi\sigma_{i, c}^2}}\exp\{-\frac{1}{2\sigma_{i,c}^2}(s_i - \mu_{i,c})^2\}\\
	=&\frac{1}{\sqrt{2\pi\sigma_{i, c}^2}}\exp\{-\frac{1}{2\sigma_{i,c}^2}(s_i^2 - 2 s_i \mu_{i,c} + \mu_{i,c}^2)\}\\
	=&\frac{1}{\sqrt{2\pi\sigma_{i, c}^2}}\exp\{s_i \frac{\mu_{i,c}}{\sigma_{i,c}^2}  -s_i^2\frac{1}{2\sigma_{i,c}^2} - \frac{\mu_{i,c}^2}{2\sigma_{i,c}^2} \}\\
	=& Z_{i,c}^{-1}\exp\{s_i \frac{\mu_{i,c}}{\sigma_{i,c}^2}  -s_i^2\frac{1}{2\sigma_{i,c}^2}  \}
\end{align}
Therefore by independence of components:
\begin{align}
	p(\s|c)=& \exp\{\sum_{i=1}^N (s_i \frac{\mu_{i,c}}{\sigma_{i,c}^2}  -s_i^2\frac{1}{2\sigma_{i,c}^2})\}\prod_{i=1}^N  Z_{i,c}^{-1}\\
	       =& \exp\{\sum_{i=1}^N (s_i \frac{\mu_{i,c}}{\sigma_{i,c}^2}  -s_i^2\frac{1}{2\sigma_{i,c}^2})\} Z_{c}^{-1}
\end{align}
And change of variable gives:
\begin{align}
	p(\x|c)=& |\J\g(\x)| \exp\{\sum_{i=1}^N (g_i(\x) \frac{\mu_{i,c}}{\sigma_{i,c}^2}  -g_i(\x)^2\frac{1}{2\sigma_{i,c}^2})\} Z_{c}^{-1}\\
	=& |\J\g(\x)| \exp\{\langle\lambdab_c, \T(\g(x))\rangle \} Z_{c}^{-1}
\end{align}
where $\lambdab_c= \begin{bmatrix} \frac{\mu_{1,c}}{\sigma_{1,c}^2} \\ -\frac{1}{2\sigma_{1,c}^2} \\ \vdots \\ \frac{\mu_{N,c}}{\sigma_{N,c}^2} \\ -\frac{1}{2\sigma_{N,c}^2}\end{bmatrix}$ and $\T(\g(\x))= \begin{bmatrix}g_1(\x) \\ g_1^2(\x)\\ \vdots \\ g_N(\x) \\ g_N^2(\x)\end{bmatrix}$.

\section{Latent state prediction} \label{apx:pred}
\begin{figure}[h]
	\centering
	\includegraphics[scale=0.5]{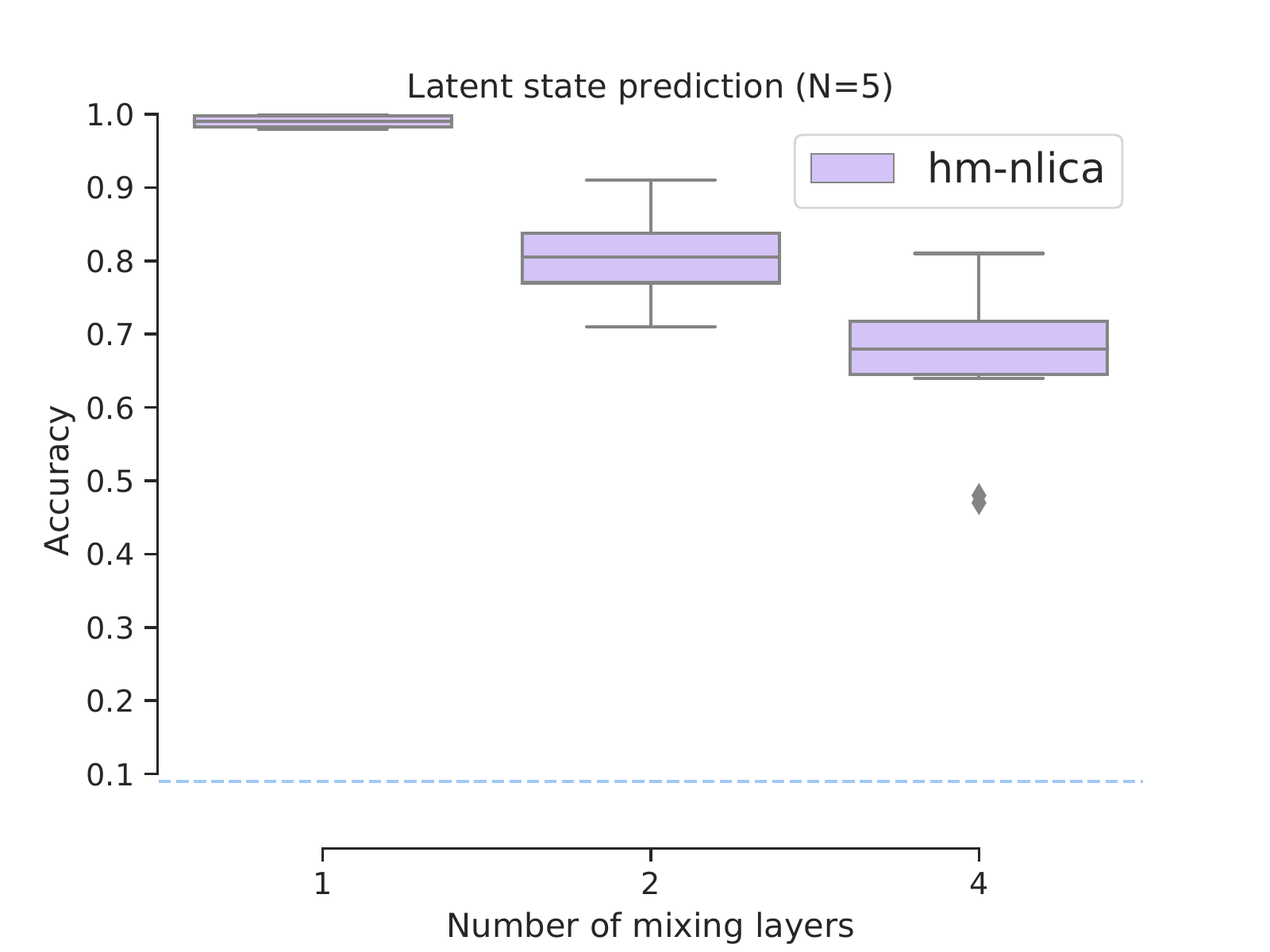}
	\caption{Performance of our Hidden Markov nonlinear ICA vs. chance level (dotted line = 0.09) for different levels of nonlinearity in latent state prediction. The number of latent states is $11 = 2N+1$}
	\label{fig:accresults}
\end{figure}

\end{document}